\documentclass[accepted]{uai2021} 

\usepackage[american]{babel}

\usepackage{natbib} 
    \bibliographystyle{plainnat}
    
\usepackage{mathtools} 
\usepackage{booktabs} 
\usepackage{tikz} 



\usepackage{graphicx}
\usepackage{amsmath}
\usepackage{amsthm}
\usepackage{amssymb}
\usepackage{bm}

\usepackage{relsize} 

\usepackage{eucal} 

\usepackage{algorithm}
\usepackage[noend]{algorithmic}


\usepackage{wrapfig}
\setlength\intextsep{0pt} 
\usepackage{subcaption}

\usetikzlibrary{matrix}
\usepackage{pgfplots}
\usepgfplotslibrary{fillbetween}
\pgfplotsset{compat=1.16,}
\usetikzlibrary{patterns}
\newcommand{\symfootnote}[1]{%
\let\oldthefootnote=\thefootnote%
\stepcounter{mpfootnote}%
\addtocounter{footnote}{-1}%
\renewcommand{\thefootnote}{\fnsymbol{mpfootnote}}%
\footnote{#1}%
\let\thefootnote=\oldthefootnote%
}

\newtheorem{theorem}{Theorem}

\newtheorem{lemma}[theorem]{Lemma}

\newtheorem{cond}{Condition}

\newtheorem{proposition}{Proposition}
\theoremstyle{definition}
\newtheorem{definition}{Definition}

\renewcommand\vec{\mathbf}





\renewcommand{\epsilon}{\varepsilon}

\newcommand{\reward}{\textit{r}}
\newcommand{\payoff}{\textit{payoff}}
\newcommand{\regret}{\textit{regret}}
\newcommand{\simplex}{\Delta}
\newcommand{\defpolicy}{\pi}
\newcommand{\optpolicy}{\defpolicy^\star}
\newcommand{\altpolicy}{\hat{\defpolicy}}
\newcommand{\policyset}{\Pi}
\newcommand{\mixedpolicy}{\tilde{\defpolicy}}
\newcommand{\mixedoptpolicy}{\tilde{\defpolicy}^\star}
\newcommand{\attract}{\mathbf{z}} 
\newcommand{\attractset}{Z}
\newcommand{\mixedattract}{\tilde{\attract}}
\newcommand{\attracti}[1]{z_{#1}}

\newcommand{\sleep}{\kappa} 

\newcommand{\effort}{\mathbf{a}}
\newcommand{\efforti}[1]{a_{#1}}
\newcommand{\attractlow}[1]{\underline{\attracti{#1}}}
\newcommand{\attracthigh}[1]{\overline{\attracti{#1}}}
\newcommand{\wildlife}{\mathbf{w}}
\newcommand{\wildlifei}[1]{w_{#1}}
\newcommand{\attacki}[1]{k_{#1}} 

\newcommand{\curreffortcoef}{\gamma}
\newcommand{\pasteffortcoef}{\beta}
\newcommand{\neighborcoef}{\eta}
\newcommand{\wildlifegrowth}{\psi}

\newcommand{\state}{\mathbf{s}}


\title{Robust Reinforcement Learning Under Minimax Regret for Green Security}

%
%
\author[1]{\href{mailto:Lily Xu <lily_xu@g.harvard.edu>?Subject=UAI 2021 paper}{Lily~Xu}{}}
\author[1]{Andrew~Perrault}
\author[2]{Fei~Fang}
\author[1]{Haipeng~Chen}
\author[1]{Milind Tambe}
\affil[1]{%
    Center for Research on Computation and Society\\
    Harvard University
}
\affil[2]{%
    Institute for Software Research\\
    Carnegie Mellon University
}

\begin{document}
\maketitle

\begin{abstract}
Green security domains feature defenders who plan patrols in the face of uncertainty about the adversarial behavior of poachers, illegal loggers, and illegal fishers. Importantly, the deterrence effect of patrols on adversaries' future behavior makes patrol planning a sequential decision-making problem. 
Therefore, we focus on robust sequential patrol planning for green security following the minimax regret criterion, which has not been considered in the literature. We formulate the problem as a game between the defender and nature who controls the parameter values of the adversarial behavior and design an algorithm MIRROR to find a robust policy. MIRROR uses two reinforcement learning--based oracles and solves a restricted game considering limited defender strategies and parameter values.
We evaluate MIRROR on real-world poaching data.
\end{abstract}



\section{Introduction}
\label{sec:intro}



Defenders in green security domains aim to protect wildlife, forests, and fisheries and are tasked to strategically allocate limited resources in a partially unknown environment \citep{fang2015security}. For example, to prevent poaching, rangers will patrol a protected area to locate and remove snares (Figure~\ref{fig:rangers}). 
Over the past few years, predictive models of poacher behavior have been developed and deployed to parks around the world, creating both opportunity and urgency for effective patrol planning strategies~\citep{kar2017cloudy,gurumurthy2018exploiting,xu2020stay}.

While patrol planning for security has been studied under game-theoretic frameworks~\citep{korzhyk2010complexity,basilico2012patrolling,marecki2012playing}, green security domains have two crucial challenges: \emph{uncertainty} in adversaries' behavior model and the \emph{deterrence} effect of patrols --- how current patrols reduce the likelihood that adversaries attack in the \emph{future}. Data is often scarce in these domains and it is hard to learn an accurate adversarial behavior model~\citep{fang2015security,xu2016playing,sessa2020learning}; patrols planned without considering the imperfection of the behavior model would have limited effectiveness in practice.
Deterrence is hypothesized to be a primary mechanism through which patrols reduce illegal activity~\citep{levitt1998increased}, especially in domains such as wildlife protection, as rangers rarely apprehend poachers and only remove an estimated 10\% of snares \citep{moore2018ranger}. 
These characteristics make apparent the need for robust sequential patrol planning for green security, which is the focus of this paper. We confirm the deterrence effect in green security domains for the first time through analyzing real poaching data, providing real-world footing for this research.

\begin{figure}
  \centering
  \includegraphics[width=0.8\columnwidth]{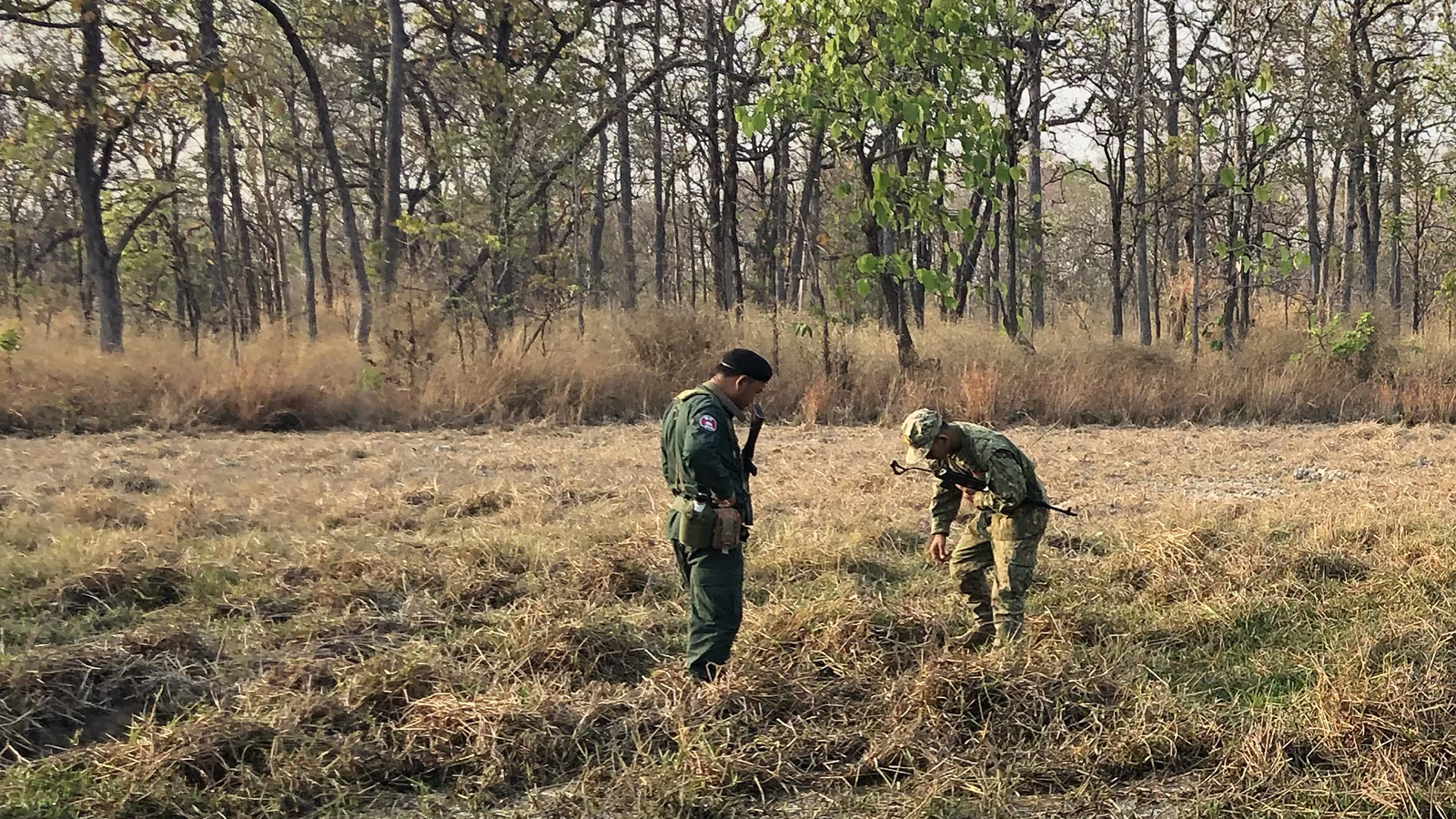}
  \caption{Rangers remove a snare in Srepok Wildlife Sanctuary in Cambodia, where the Cambodian government plans to reintroduce tigers in 2022.}
  \label{fig:rangers}
\end{figure}





In this paper, we consider the \emph{minimax regret} criterion for robustness~\citep{savage1951theory,wang2003incremental}: minimize the maximum regret, which is defined as the maximum difference under any uncertainty instantiation between the expected return of the chosen strategy against the expected return of an optimal strategy. 
Compared to maximin reward, minimax regret is more psychologically grounded according to phenomena such as risk aversion \citep{loomes1982regret} and is less conservative and sensitive to worst-case outcomes \citep{kouvelis2013robust}. However, optimizing for regret is challenging \citep{nguyen2014regret}, especially for complex sequential decision making problems as evidenced by lack of past work on minimax regret in deep reinforcement learning (RL), despite the success and popularity of deep RL in recent years~\citep{mnih2015human,lillicrap2015continuous}. 
The main obstacle is that when the environment parameters change, the reward of a strategy changes \emph{and} there may be a new optimal strategy, making it hard to quickly estimate the maximum regret of a strategy.
We overcome this obstacle by developing a new method named MIRROR\symfootnote{Code available at \href{https://github.com/lily-x/mirror}{https://github.com/lily-x/mirror}} that enables minimax regret planning under environment uncertainty using RL. 
We model the robust planning problem as a two-player, zero-sum game between an agent, who looks for minimax regret--optimal policies, and nature, who looks for regret-maximizing instantiations of the uncertain environment parameters (we refer to this game as a \emph{max-regret game}). 
This model enables us to use the double oracle method~\citep{mcmahan2003planning} and the policy-space response oracle (PSRO) framework~\citep{lanctot2017unified} to incrementally generate strategies and environment parameters to be considered. More specifically, MIRROR includes two RL-based oracles. The agent oracle solves a typical sequential decision-making problem and returns a defender strategy. The nature oracle finds the environment parameters and the corresponding optimal defender strategy that lead to the highest regret for a given defender strategy. We use a policy-gradient approach for both oracles. In the nature oracle, we treat the environment parameters as input to the policy network and update the environment parameters and the network parameters with a wake--sleep procedure. We further enhance the algorithm with parameter perturbation in both oracles.

Our contributions are summarized as follows. 
(1)~We provide a realistic adversary model learned from real-world poaching data from Queen Elizabeth National Park (QENP) in Uganda, which demonstrates deterrence and opens the door to further RL research in service of protecting the environment.
(2)~We propose MIRROR, a framework to calculate minimax regret--optimal policies using RL for the first time, and apply this approach to green security domains.
(3)~We prove that MIRROR converges to an $\epsilon$--optimal strategy in a finite number of iterations in our green security setting. 
(4)~We empirically evaluate MIRROR on real-world poaching data from QENP.

\section{Related Work}
\label{sec:related-work}




\paragraph{Robust planning with minimax regret}
Minimax regret has been considered for preference elicitation of additive utilities \citep{braziunas2007minimax} and rewards \citep{regan2009regret}, as well as robotics planning in uncertain Markov decision processes with a model-based approach \citep{rigter2021minimax}. 
Double oracle~\citep{mcmahan2003planning} has been used to optimize for minimax regret in security games and in robust optimization~\citep{nguyen2014regret,gilbert2017double} but in single-action settings, not policy spaces. Double oracle has also been used without minimax regret for solving large zero-sum games~\citep{bosansky2014exact,jain2011double}.

\paragraph{Robust planning in RL}
Robustness in RL has been heavily studied, both in the context of robust adversarial RL \citep{pinto2017robust,pan2019risk,zhang2020robustdeep} and nonstationarity in multi-agent RL settings \citep{li2019robust,zhang2020robustmulti}. For example, PSRO extends double oracle from state-independent pure strategies to policy-space strategies to be used for multiplayer competitive games \citep{lanctot2017unified}. \citet{zhang2020robustdeep} consider robustness against adversarial perturbations on state observations.
The line of work whose setting is most similar to our problem is robust RL with model uncertainty, specifically in the transition and reward functions \citep{wang2020reinforcement,zhang2020robustmulti}. However, these approaches all consider robustness subject to maximin reward, whereas we optimize for minimax regret robustness. The two objectives are incompatible; we cannot simply substitute minimax regret into the reward function and solve using minimax reward, as computing the maximum regret incurs the challenge of knowing the optimal strategy and its corresponding reward.

\paragraph{Green security games (GSGs)} 
Literature on GSGs model the problem in green security domains as a game between a defender and boundedly rational attackers, with the assumption that attacker models can be learned from data~\citep{nguyen2016capture,yang2014adaptive,fang2016deploying,xu2017optimal}. Most of this work does not consider uncertainty in the learned attacker model and solve the patrol planning problem using mathematical programming, which is not scalable for planning sequential patrols over time horizons going beyond 2 to 3 timesteps. Past work addressing uncertainty in green security focuses on the setting with a stochastic adversary \citep{xu2021dual}, treating the problem as one of learning a good strategy against the optimal strategy in hindsight.
RL has been used for planning in GSGs with real-time information to model defenders responding to footprints during a patrol \citep{wang2019deep} and strategic signalling with drones \citep{venugopal2021reinforcement}.
However, uncertainty and robustness have not been explicitly considered together in GSG literature and much existing work on green security do not have access to real-world data and realistic models of deterrence.








\section{Problem Statement}
\label{sec:problem-statement}

In green security settings, we have a \emph{defender} (e.g., ranger) who conducts patrols in a protected area to prevent resource extraction by an \emph{attacker} (e.g., poacher or illegal logger). 
Let $N$ be the number of targets, such as $1 \times 1$~km regions in a protected area, that we are trying to protect. We have timesteps $t = 1, 2, \ldots, T$ up to some finite time horizon $T$ where each timestep represents, for example, a one-month period. The defender needs to choose a patrol strategy (also called the defender \emph{policy}) $\defpolicy \in \policyset$, which sequentially allocates patrol effort at each timestep. We denote patrol effort at time $t$ as $\effort^{(t)}$, where $\efforti{i}^{(t)} \in [0, 1]$ represents how much effort the patrollers allocate to target~$i$. We constrain total effort by a budget~$B$ such that $\sum_i \efforti{i}^{(t)} \leq B$ for all $t$. 

Consider the poaching scenario specifically. Let $\wildlife^{(t)} \in \mathbb{R}_{\geq 0}^N$ describe the distribution of wildlife in a protected area at timestep~$t$, with $\wildlifei{i}^{(t)}$ denoting wildlife density in target~$i$. What the rangers care about the most is the total wildlife density by the end of the planning horizon, i.e., $\sum_i w_i^{(T)}$. 
Threatening the wildlife population are poachers, who come into the park and place snares to trap animals. Their behavior is governed by a number of factors including the current patrol strategy, the past patrol strategy due to the deterrence effect, geographic features including distance from the park boundary, elevation, and land cover, and others. Lacking complete and high-quality data about past poaching patterns, we are not able to build an accurate model of poacher behavior. 

Therefore, we consider a parameterized model for attacker's behavior and assume that the values of some of the parameters, denoted by $\attract$, are uncertain. 
We assume that $\attract$ comes from a given uncertainty region $\attractset$, which is a compact set specifying a range $\attracti{j} \in [\attractlow{j}, \attracthigh{j}]$ for each uncertain parameter $j$. We have no a priori knowledge about distribution over $\attractset$. 
We want to plan a patrol strategy $\defpolicy$ for the defender that is robust to parameter uncertainty following the minimax regret criterion. Let $\reward(\defpolicy, \attract)$ be the defender's expected return for taking policy $\defpolicy$ under environment parameters $\attract$, e.g., the expected total wildlife density at the end of the planning horizon.
Then the regret incurred by the agent for playing strategy $\defpolicy$ when the parameter values are $\attract$ is $\regret(\defpolicy, \attract)=\reward(\optpolicy(\attract), \attract) - \reward(\defpolicy, \attract)$, where $\optpolicy(\attract)$ is the optimal policy that maximizes reward under parameters $\attract$.

Our objective is then to find a strategy $\defpolicy$ for the defender that minimizes maximum possible regret under any parameter values $\attract$ that falls within the uncertainty region $\attractset$.
Formally, we want to solve the following optimization problem
\begin{align}
    \label{eq:minimax-regret}
    \min_{\defpolicy} \max_{\attract} \left( \reward(\optpolicy(\attract), \attract) - \reward(\defpolicy, \attract) \right) \ .
\end{align}

We can formulate this robust planning problem as a two-player game between an \emph{agent} who wants to learn an optimal defender strategy (or policy) $\defpolicy$ against \emph{nature} who selects worst-case parameter values $\attract$. 
Then the agent's payoff is $-\regret(\defpolicy, \attract)$ and nature's payoff is $\regret(\defpolicy, \attract)$. 



\begin{definition}[Max-regret game]
\label{def:regret-game}
We define the \emph{max-regret game} as a zero-sum game between the agent and nature, where the agent's payoff is
\begin{align}
\label{eq:regret-game-payoff}
\resizebox{.9\linewidth}{!}{
    $\payoff(\defpolicy, \attract) = -\regret(\defpolicy, \attract) = \reward(\defpolicy, \attract) - \reward \left( \optpolicy(\attract), \attract \right) \ .$
}
\end{align}
\end{definition}
The agent can also choose a mixed strategy (or randomized policy) $\mixedpolicy$, which is a probability distribution over $\policyset$. We denote by $\simplex(\policyset)$ the set of the defender's mixed strategies. Likewise, we have mixed strategy $\mixedattract \in \simplex(\attractset)$ for nature.

\paragraph{Generalizability}

Our approach applies not just to green security domains, but is in fact applicable to any setting in which we must learn a sequential policy $\defpolicy$ with uncertainty in some environment parameters $\attract$ where our evaluation is based on minimax regret. 
Our framework is also not restricted to hyper-rectangular shaped uncertainty regions; any form of uncertainty with a compact set on which we do not have a prior belief would work. 





\subsection{Real-World Deterrence Model}
\label{sec:deterrence}

No previous work in artificial intelligence or conservation biology has provided evidence of deterrent effect of ranger patrols on poaching, a topic critically important to planning real-world ranger patrols. 
Thus in our work on planning for green security domains, we began by exploring an open question about how poachers respond to ranger patrols. 

Past work has investigated deterrence to inconclusive results \citep{ford2017real,dancer2019evaluation}. Using real poaching data from Queen Elizabeth National Park (QENP) in Uganda, we study the effect of patrol effort on poacher response. We find clear evidence of \emph{deterrence} in that higher levels of past patrols reduce the likelihood of poaching; we are the first to do so. 
We also find that more past patrols on neighboring targets increase the likelihood of poaching, suggesting \emph{displacement}. 

\begin{wrapfigure}[8]{r}{.4\columnwidth}
  \centering
  \includegraphics[width=\linewidth]{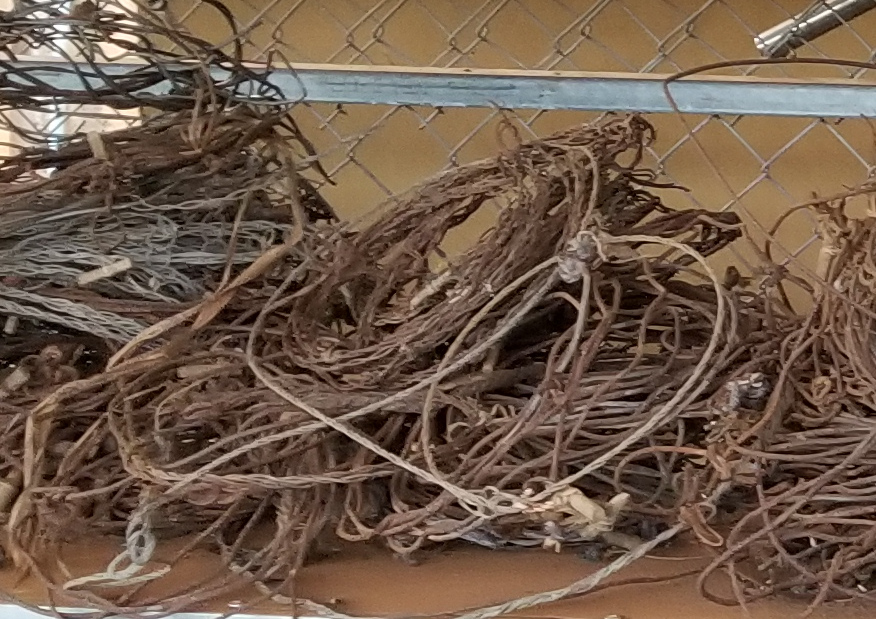}
  \caption{Snares.}
  \label{fig:snares}
\end{wrapfigure}
For each target, we calculate the total ranger patrol effort (in kilometers patrolled) and count the number of instances of illegal activity detected per month. We construct the patrol effort from 138,000 GPS waypoints across seven years of QENP poaching data. Observations of illegal activity are predominantly snares, but also include bullet cartridges, traditional weapons, and encounters with poachers. 


Let $\attracti{i}$ be the attractiveness of target~$i$ to poachers. To understand the effect of patrol effort on poaching activity, we learn the probability of detecting illegal activity in target~$i$ as a linear combination of
\begin{align}
    \attracti{i} + \curreffortcoef \cdot \efforti{i}^{(t)} + \pasteffortcoef \cdot \efforti{i}^{(t-1)},
\end{align}
which is then squashed through the logistic function. The parameter $\pasteffortcoef$ is the coefficient on past patrol effort $\efforti{i}^{(t-1)}$, measuring the deterrence effect we are trying to isolate, and $\curreffortcoef$ is the coefficient on current patrol effort $\efforti{i}^{(t)}$, measuring the difficulty of detecting snares. 

See Table~\ref{tab:coef-effort} for the learned values of the average attractiveness of each target $\overline{\attracti{i}}$, the coefficient on current effort $\curreffortcoef$, and the coefficient on past effort $\pasteffortcoef$. Each row studies this effect for a different time interval. For example, 1 year, 3 months looks at the impact of a year of previous patrol effort on illegal activity in the subsequent three months. The values for current and past patrol effort are normalized to highlight relatively high or low effort; $\curreffortcoef$ and $\pasteffortcoef$ reflect coefficients after normalization. The p-values for $\curreffortcoef$ and $\pasteffortcoef$ are all statistically significant with $p < 0.05$. 
The learned value of $\pasteffortcoef$ is negative across all datasets and settings --- \emph{thus, increased past patrol effort does have a measurable effect of deterring poaching.} 



\begin{table}
\centering
\caption{Learned coefficients, revealing deterrence} \label{tab:coef-effort} 
\begin{tabular}{ rccc } 
\toprule
& $\overline{\attracti{i}}$ & $\curreffortcoef$ & $\pasteffortcoef$ \\ 
\midrule
1 month, 1 month & $-9.285$ & $1.074$ & $-0.165$ \\ 
3 month, 3 month & $-10.624$ & $0.685$ & $-0.077$ \\ 
1 year, 1 month & $-9.287$ & $1.061$ & $-0.217$ \\ 
1 year, 3 month & $-10.629$ & $0.676$ & $-0.042$ \\ 
1 year, 1 year & $-8.559$ & $2.159$ & $-0.306$ \\ 
\bottomrule
\end{tabular}
\end{table}


Ideally, when poachers are deterred by ranger patrols, they would leave the park completely and desist their hunt of wildlife. Alternatively, they may move to other areas of the park. We show that the latter appears to be true. 
To do so, we study the spatial relationship between neighboring targets, using three spatial resolutions: $3 \times 3$, $5 \times 5$, and $7 \times 7$. We learn
\begin{align}
\resizebox{.9\linewidth}{!}{
$\attracti{i} + \curreffortcoef \cdot \efforti{i}^{(t)} + \pasteffortcoef \cdot \efforti{i}^{(t-1)}+ \neighborcoef \cdot \mathlarger{\sum}\limits_{j \in \text{neighbors}(i)} \efforti{j}^{(t-1)}$
}
\end{align}
where $\neighborcoef$ is the coefficient on past patrol effort on neighboring cells. As shown in Table~\ref{tab:coef-neighbors}, all learned values of $\neighborcoef$ are positive, indicating that increased patrols on neighboring areas increases the likelihood of poaching on a target in the next timestep. This result is consistent across the three spatial resolutions, and strongest for the narrowest window of $3 \times 3$. Observe as well that the values for $\overline{\attracti{i}}$, $\curreffortcoef$, and $\pasteffortcoef$ are remarkably consistent, demonstrating the robustness of our findings. 

\begin{table}
\centering
\caption{Learned coefficients, with neighbors included, revealing displacement at a 1-month interval}
\label{tab:coef-neighbors} 
\begin{tabular}{ rcccc }
\toprule
& $\overline{\attracti{i}}$ & $\curreffortcoef$ & $\pasteffortcoef$ & $\neighborcoef$ \\ 
\midrule
$3 \times 3$ & $-10.633$ & $0.687$ & $-0.098$ & $0.696$ \\ 
$5 \times 5$ & $-10.636$ & $0.688$ & $-0.097$ & $0.392$ \\ 
$7 \times 7$ & $-10.632$ & $0.688$ & $-0.097$ & $0.518$ \\ 
\bottomrule
\end{tabular}
\end{table}




\subsection{Green Security Model}
\label{sec:model}

In green security settings, the environment dynamics, including attacker behavior, can be described by an uncertain Markov decision process (UMDP) defined by the tuple $\langle \mathcal{S}, \state^{(0)}, \mathcal{A}, \mathcal{T}, \mathcal{R} \rangle$. The \emph{state} $\state$ is a tuple $(\effort^{(t-1)}, \wildlife^{(t-1)}, t)$ of past patrol effort, past wildlife, and current timestep with initial state $\state^{(0)} = (\vec{0}, \wildlife^{(0)}, 0)$. The \emph{action} $\effort^{(t)}$ is an effort vector describing time spent in each target, subject to a budget~$B$. Note that the model can be generalized to consider a \emph{sequence} of past effort and wildlife, which would model an attacker with a longer memory length. 

The environment dynamics are governed by the \emph{transitions}, a set $\mathcal{T}$ containing the possible mappings $\mathcal{T}_{\attract} : \mathcal{S} \mapsto \mathcal{S}$ where the transition $\mathcal{T}_{\attract} \in \mathcal{T}$ depends on environment parameters $\attract$. A mixed strategy $\mixedattract$ would produce a distribution over $\mathcal{T}$. These transitions are what makes our Markov decision process uncertain, as we do not know which mapping is the true transition. We model the adversary behavior with a simple logistic model, based on learned deterrence effect.
The probability that the poacher will attack a target~$i$ is given by the function
\begin{align}
\label{eq:poacher-prob}
\resizebox{.9\linewidth}{!}{
    $p_{i}^{(t)} = \text{logistic} \left( \attracti{i} + \pasteffortcoef \cdot \efforti{i}^{(t-1)} + \neighborcoef \cdot \mathlarger{\sum}\limits_{j \in \text{neighbors}(i)} \efforti{j}^{(t-1)} \right)$
}
\end{align}
where parameters $\pasteffortcoef < 0$ and $\neighborcoef > 0$ govern the strength of the deterrence and displacement effects, as described in Section~\ref{sec:deterrence}.
At each time step, the poacher takes some action $\attacki{i}^{(t)} \in \{0, 1\}$ where they either place a snare $\attacki{i}^{(t)} = 1$ or not $\attacki{i}^{(t)} = 0$. The realized adversary attack $k_i^{(t)}$ is drawn from Binomial distribution $\attacki{i}^{(t)} \sim B(p_i^{(t)})$.

The actions of the poacher and ranger then affect the wildlife population of the park. We use a regression model as in
\begin{align}
\label{eq:wildlife-response}
   \wildlifei{i}^{(t)} = \max\{0, (\wildlifei{i}^{(t-1)})^{\wildlifegrowth} - \alpha \cdot \attacki{i}^{(t-1)} \cdot (1 - \efforti{i}^{(t)})\}
\end{align}
where $\alpha > 0$ is the strength of poachers eliminating wildlife, and $\wildlifegrowth > 1$ is the wildlife natural growth rate. 

Our objective is to maximize the number of wildlife. The \emph{reward} $R$ is the sum of wildlife at the time horizon, so $R(\state^{(t)}) = \sum_{i=1}^{N} \wildlifei{i}^{(T)}$ if $t = T$ and $R(\state^{(t)})=0$ otherwise. To understand the relationship between defender return~$R$ in the game and the expected reward $\reward$ of the agent oracle from our objective in Equation~\ref{eq:minimax-regret}, we have
\begin{align}
    \label{eq:r}
    \reward(\defpolicy, \attract) = \mathbb{E}\left[R(\state^{(T)}) \right]
\end{align}
taking the expectation over states following the transition $\state^{(t+1)} \sim \mathcal{T}_{\attract}(\state^{(t)}, \defpolicy(\state^{(t)}), \state^{(t+1)})$ 
with initial state $\state^{(0)} = (\wildlife^{(0)}, \vec{0}, 0)$.




\section{Robust Planning}
\label{sec:robust-planning}

We propose MIRROR, which stands for MInimax Regret Robust ORacle. MIRROR is an algorithm for computing minimax regret--optimal policies in green security settings to plan patrols for a defender subject to uncertainty about the attackers' behavior. MIRROR also applies in generic RL contexts with a compact uncertainty set over transitions and rewards. 


To learn a minimax regret--optimal policy for the defender, we take an approach based on double oracle \citep{mcmahan2003planning}. 
Given our sequential problem setting of green security, we build on policy-space response oracle (PSRO) \citep{lanctot2017unified}. 
As discussed in Section~\ref{sec:problem-statement}, we pose the minimax regret optimization as a zero-sum game in the max regret space, between an agent (representing park rangers) who seeks to minimize max regret and nature (uncertainty over the adversary behavior parameters) which seeks to maximize regret. Our objective can be expressed as an optimization problem, as defined in Equation~\ref{eq:minimax-regret}.





\begin{figure*}
  \centering
  \includegraphics[width=.9\textwidth]{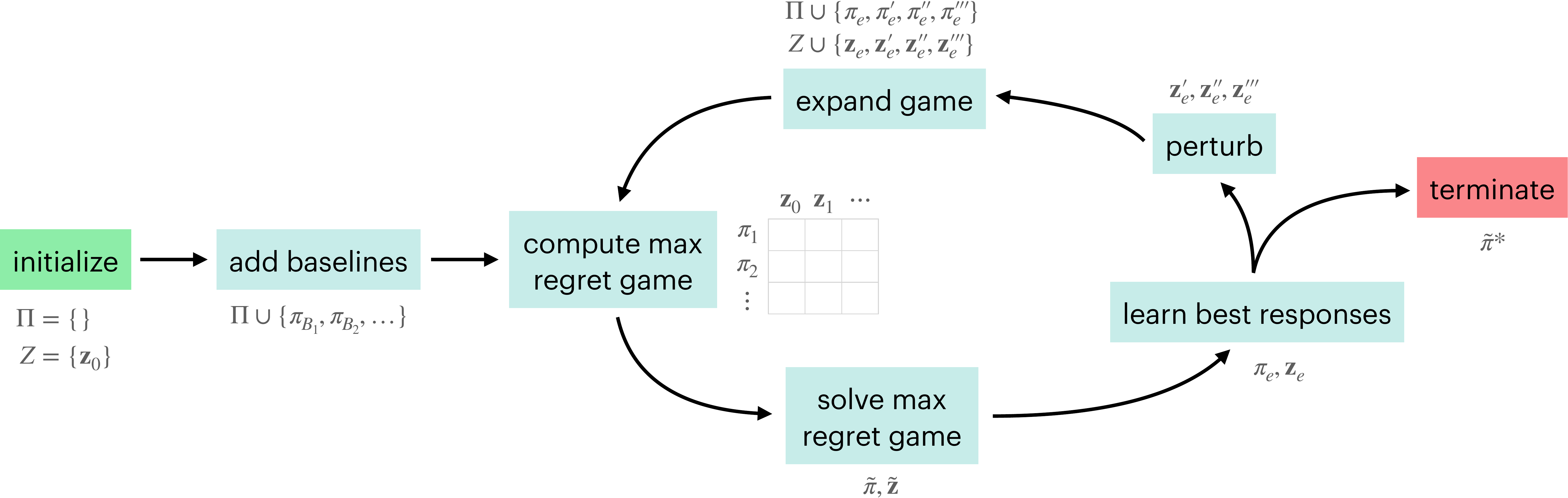}
  \caption{Our MIRROR algorithm, with figure design inspired by the double oracle figure from \citet{bovsansky2016algorithms}.}
  \label{fig:our-algorithm}
\end{figure*}

\begin{algorithm}[tb]
\caption{MIRROR: MInimax Regret Robust ORacle}
\label{alg:mirror}
\textbf{Input}: Environment simulator and parameter uncertainty set~$\attractset$ \\
\textbf{Params}: Convergence threshold $\epsilon$, num perturbations $O$ \\
\mbox{\textbf{Output}: Minimax regret--optimal agent mixed  strategy~$\mixedoptpolicy$} \nolinebreak
\begin{algorithmic}[1] 
\STATE Select an initial parameter setting $\attract_0 \in \attractset$ at random \label{line:initial-attract}\\
\STATE Compute baseline and heuristic strategies $\defpolicy_{B_1}, \defpolicy_{B_2}, \ldots$\label{line:initial-baseline}\\
\STATE $\attractset_0 = \{\attract_0\}$ \\
\STATE $\policyset_0 = \{\defpolicy_{B_1}, \defpolicy_{B_2}, \ldots\}$ \label{line:add-baselines} \\
\FOR {epoch $e = 1, 2, \ldots$}
\STATE $(\mixedpolicy_{e}, \mixedattract_{e}) = \textsc{ComputeMixedNash}(\policyset_{e-1}, \attractset_{e-1})$ \\
\STATE $\defpolicy_e = \textsc{AgentOracle}(\mixedattract_{e})$ \label{line:agent-br} \\
\STATE $(\attract_e, \altpolicy_e) = \textsc{NatureOracle}(\mixedpolicy_{e})$ \label{line:nature-br} \\
\IF {$\regret(\mixedpolicy_{e}, \attract_{e}) - \regret(\mixedpolicy_{e-1}, \mixedattract_{e-1}) \leq \epsilon$ and $\reward(\defpolicy_e, \mixedattract_{e}) - \reward(\mixedpolicy_{e-1}, \mixedattract_{e-1}) \leq \epsilon$}
\STATE \textbf{return} $\mixedpolicy_{e}$ \label{line:converge}
\ENDIF
\FOR {perturbation $o = 1, \ldots, O$} \label{line:perturb-start} 
\STATE perturb $\attract_e$ as $\attract^o_{e}$
\STATE $\defpolicy_{e}^o = \textsc{AgentOracle}(\attract_{e}^o)$ \label{line:perturb-end}
\ENDFOR
\STATE Compute expected returns as $\reward(\defpolicy_e, \attract)$ for all $\attract \in \attractset_{e-1}$ and $\reward(\defpolicy, \attract_e)$ for all $\defpolicy \in \policyset_{e-1}$ \label{line:compute-returns} \\
\STATE Compute max-regret game payoffs as Equation~\ref{eq:regret-game-payoff} \label{line:update-game} \\
\STATE $\attractset_{e} = \attractset_{e-1} \cup \{\attract_e, \attract_e^1, \ldots, \attract_e^O\}$ \label{line:update-nature-set}
\STATE $\policyset_{e} = \policyset_{e-1} \cup \{\defpolicy_e, \altpolicy_e, \defpolicy_e^1, \ldots, \defpolicy_e^O\}$ \label{line:update-agent-set}
\ENDFOR
\end{algorithmic}
\end{algorithm}

The full MIRROR procedure for minimax regret optimization using RL is given in Algorithm~\ref{alg:mirror} and visualized in Figure~\ref{fig:our-algorithm}.
The three necessary components are:
\begin{enumerate}
    \item \textbf{Agent oracle}: An RL algorithm that, given mixed strategy $\mixedattract_e$ as a distribution over $\attractset_e$, learns an optimal policy $\defpolicy_e$ for the defender to maximize reward in the known environment described by $\mixedattract_e$. 
    
    \item \textbf{Nature oracle}: An RL algorithm to compute an alternative policy $\altpolicy_e$ and new environment parameters $\attract_e$ given the current agent mixed strategy $\mixedpolicy_e$ over all policies $\policyset_e$. 
    The nature oracle's objective is to maximize regret: the difference between expected value of alternative policy $\altpolicy_e$ and the agent strategy $\mixedpolicy_e$. 
    
    
    Ideally, the alternative policy would be the optimal policy given environment parameters $\attract_e$, that is, $\altpolicy_e = \optpolicy(\attract_e)$. However, given that these RL approaches do not guarantee perfect policies, we must account for the imperfection in these oracles, which we discuss in Section~\ref{sec:param-perturbation}. 
    
    \item \textbf{Mixed Nash equilibrium solver}: A solver to compute a mixed Nash equilibrium for each player as a distribution over $\policyset_e$ for the agent and over $\attractset_e$ for nature in the max-regret game defined in Definition~\ref{def:regret-game}. 
\end{enumerate}



The MIRROR procedure would unfold as follows. We begin with arbitrary initial parameter values $\attract_0$ and baseline strategies (lines \ref{line:initial-attract}--\ref{line:add-baselines}).
The agent then learns a best-response defender policy $\defpolicy_1$ against these initial parameter values (line~\ref{line:agent-br}). 
Nature responds with $\attract_1$ (line~\ref{line:nature-br}). We update the payoff matrix in the max-regret game (lines~\ref{line:compute-returns}--\ref{line:update-game}), add the best response strategies $\defpolicy_e$ and $\attract_e$ to the strategy sets $\policyset_e$ and $\attractset_e$ for the agent and nature respectively (lines~\ref{line:update-nature-set}--\ref{line:update-agent-set}), and continue until convergence. Upon convergence (line~\ref{line:converge}), we reach an $\epsilon$-equilibrium in which neither player improves their payoff by more than $\epsilon$. In practice, for the sake of runtime, we cap number of iterations of double oracle to 10, a strategy also employed by \citet{lanctot2017unified}. We also include parameter perturbation (lines~\ref{line:perturb-start}--\ref{line:perturb-end}), which we discuss in Section~\ref{sec:param-perturbation}.

In many double oracle settings, the process of computing a best response is typically fast, as the problem is reduced to single-player optimization. However, the nature oracle is particularly challenging to implement due to our objective of minimax regret. Additionally, the imperfect nature of our oracles implies we are not guaranteed to find exact best strategies. We discuss our approaches below. 



\subsection{The Agent Oracle}

We want to find the best policy in a given environment setting. In our specific setting of poaching prevention, we consider deep deterministic policy gradient (DDPG) \citep{lillicrap2015continuous}. Policy gradient methods allow us to differentiate directly through a parameterized policy, making them well-suited to continuous state and action spaces, which we have. 
Note again that MIRROR is agnostic to the specific algorithm used. DDPG specifically is not necessary; technically, the approach need not be RL-based as long as it enables efficient computation of a best response strategy. 

We initialize the agent's strategy set $\policyset$ with the baseline algorithms, described in Section~\ref{sec:experiments}. Other heuristic strategies, based on expert knowledge from the rangers, could be added as part of the initialization. Hyperparameters used to implement DDPG for the agent oracle are 2 hidden layers of size 16 and 32 nodes, actor learning rate $10^{-4}$, and critic learning rate $10^{-3}$.



\subsection{The Nature Oracle}

\begin{algorithm}[tb]
\caption{Nature Oracle}
\label{alg:nature-oracle}
\textbf{Input}: Agent mixed strategy $\mixedpolicy \in \simplex(\policyset)$ \\
\textbf{Parameters}: Wake--sleep frequency $\sleep$, num episodes $J$ \\
\textbf{Output}: Nature best response environment parameters~$\attract$ and alternative policy~$\altpolicy$
\begin{algorithmic}[1] 
\STATE Randomly initialize $\attract$ and $\altpolicy$
\FOR{episode $j = 1, 2, \ldots, J$}
\STATE Sample agent policy $\defpolicy \sim \mixedpolicy$ \\
\FOR {timestep $t = 1, \ldots, T$}
\STATE \textbf{if} $j \bmod 2 \kappa = 0$ \textbf{then} \hspace{.5em} Unfreeze $\altpolicy$ and $\attract$ \label{line:wake-start} \\
\STATE \textbf{else if} $j \bmod \kappa = 0$ \textbf{then} \hspace{.5em} Freeze $\altpolicy$ parameters \\
\STATE \textbf{else} \hspace{.5em}  Freeze $\attract$ parameters \label{line:wake-end} \\
\STATE Update $\altpolicy$ and $\attract$ using gradient ascent to maximize regret: $\reward(\altpolicy, \attract) - \reward(\defpolicy, \attract)$
\ENDFOR
\ENDFOR
\STATE \textbf{return} $\attract$,  $\altpolicy$
\end{algorithmic}
\end{algorithm}


Learning the nature oracle is one of the key challenges. 
Our insight is that the nature oracle's task is to perform the same task as the agent oracle, combined with the (non-inconsequential) task of learning the optimal environment parameters, made difficult by the minimax regret criterion. The nature oracle may use a similar RL setup as the agent oracle, but we now face the challenging task of updating both the alternative policy $\altpolicy$ as well as the environment parameters $\attract$ --- and the setting of $\attract$ changes both the rewards of the policies $\defpolicy$ and $\altpolicy$. 


An initial approach might be to use two separate optimizers, one to train $\altpolicy$ and another to learn $\attract$. However, as the environment parameters $\attract$ and the alternative policy $\altpolicy$ are strongly correlated, optimizing them independently would lead to sub-optimal solutions.
Therefore, we integrate 
$\attract$ and $\altpolicy$ in the same actor and critic networks in DDPG and optimize the two together. 

Our approach for the nature oracle is given in Algorithm~\ref{alg:nature-oracle}. Similar to the agent oracle to learn a best response policy $\defpolicy$, we use policy gradient to learn the alternative policy $\altpolicy$, which enables us to take the derivative directly through the parameters of $\altpolicy$ and $\attract$ to perform gradient descent. Note that the input to the DDPG policy learner is not just the state $\state^{(t)} = (\effort^{(t-1)}, \wildlife^{(t-1)}, t)$ but also the attractiveness $\attract$: $(\attract, \effort^{(t-1)}, \wildlife^{(t-1)}, t)$. Ideally, we would incrementally change the parameters $\attract$, then optimally learn each time. But that would be very slow in practice, requiring full convergence of DDPG to train $\altpolicy$ at every step. We compromise by adopting a wake--sleep procedure \citep{hinton1995wake} where we alternately update only $\altpolicy$, only $\attract$, or both $\altpolicy$ and $\attract$ together. We describe the procedure in lines \ref{line:wake-start}--\ref{line:wake-end} of Algorithm~\ref{alg:nature-oracle}, were $\sleep$ is a parameter controlling the frequency of updates between $\attract$ and $\altpolicy$.

\subsection{Mixed Nash Equilibrium Solver}

We solve for the mixed Nash equilibrium in the max-regret game with the support enumeration algorithm \citep{roughgarden2010algorithmic}, a solution approach based on linear programming, using the Nashpy implementation \citep{knight2018nashpy}. There may be multiple mixed Nash equilibria, but given that the game is zero-sum, we may take any one of them as we discuss in Section~\ref{sec:convergence}.

\subsection{Parameter Perturbation}
\label{sec:param-perturbation}

Ideally, the learned alternative policy would be the optimal policy given environment parameters $\attract$, that is, $\altpolicy = \optpolicy(\attract)$. However, the RL approaches do not guarantee perfect policies.
With RL oracles, we must consider the question: what to do when the oracles (inevitably) fail to find the optimal policy?
Empirically, we observe that for a given environment parameter setting $\attract$, the policy $\defpolicy$ learned by DDPG occasionally yields a reward $\reward(\defpolicy, \attract)$ that is surpassed by another policy $\defpolicy'$ trained on a different parameter setting $\attract'$, with $\reward(\defpolicy, \attract) < \reward(\defpolicy', \attract)$. So clearly the defender oracle is not guaranteed to produce a best response for a given nature strategy. 

Inspired by this observation, we make parameter perturbation a key feature of our approach (Algorithm~\ref{alg:mirror} lines \ref{line:perturb-start}--\ref{line:perturb-end}), inspired by reward randomization which has been successful in RL \citep{tang2021discovering,wang2020reinforcement}. In doing so, we leverage the property that, in theory, any valid policy can be added to the set of agent strategies $\policyset_e$. So we include all of the best responses to perturbed strategies by the nature oracle (see Figure~\ref{fig:our-algorithm} for an illustration), which enables us to be more thorough in looking for an optimal policy $\optpolicy$ for each parameter setting as well as find the defender best response. In that way, the double oracle serves a role similar to an ensemble in practice. 

Parameter perturbation is grounded in three key insights. First, the oracles may be imprecise, but evaluation is highly accurate (relative to the nature parameters). Second, we only have to evaluate reward once, then max regret can be computed with simple subtraction. So the step does not add much computational overhead. Third, adding more strategies to the strategy set comes at relatively low cost, as computing a mixed Nash equilibrium is relatively fast and scalable. Specifically, the problem of finding an equilibrium in a zero-sum game can be solved with linear programming, which has polynomial complexity in the size of the game tree. Thus, even if the oracles add many bad strategies, growing the payoff matrix, the computational penalty is low, and the solution quality penalty is zero as it never takes us further from a solution. 


\section{Convergence and Correctness}
\label{sec:convergence}

We prove that Algorithm~\ref{alg:mirror} converges to an $\epsilon$--minimax regret optimal strategy for the agent in a finite number of epochs if the uncertain Markov decision process (UMDP) satisfies a technical condition. The key idea of the proof is to exploit the equivalence of the value of the max-regret game and the minimax regret--optimal payoff in the UMDP. For these quantities to be equivalent, the max-regret game induced by the UMDP must satisfy a variant of the minimax theorem. Two broad classes of games that satisfy this condition are games with finite strategy spaces and continuous games; we show that the green security model of Section~\ref{sec:model} induces a continuous max-regret game.

We begin by observing that the lower value of the max-regret game is equal to the payoff of the minimax regret--optimal policy of the UMDP. Using Definition~\ref{def:regret-game}, we can write the \emph{lower value} of the max-regret game as:
\begin{align}
    \underline{v} := \max_{\mixedpolicy} \min_{\mixedattract} \left( \reward(\mixedpolicy,\mixedattract) - \reward(\mixedoptpolicy(\mixedattract),\mixedattract) \right)
\end{align}
which is algebraically equivalent to Equation~\ref{eq:minimax-regret} by the definition of the optimal mixed strategy $\mixedoptpolicy$ and rearrangement.

The connection between the lower value and the payoff received by the row player is well-known in games with finite strategy spaces as a consequence of the seminal minimax theorem \citep{neumann1928theorie}. However, no such result holds in general for games with infinite strategy spaces, where a mixed Nash equilibrium may fail to exist. For so-called continuous games, \citet{glicksberg1952a} shows that a mixed Nash equilibrium exists and the analogy to the minimax theorem holds.

\begin{definition}
A game is \emph{continuous} if the strategy space for each player is non-empty and compact and the utility function is continuous in strategy space.
\end{definition}

We formalize the required connection in Condition~\ref{cond:minimax}, which holds for both finite and continuous games.

\begin{cond}\label{cond:minimax}
Let $(\mixedpolicy, \mixedattract)$ be any $\epsilon$--mixed Nash equilibrium of the max-regret game and $\underline{v}$ be the lower value of the max-regret game. Then, $|\underline{v} - (\reward(\mixedpolicy, \mixedattract) - \reward(\mixedoptpolicy(\mixedattract), \mixedattract))| \leq \epsilon$.
\end{cond}

We show that our green security UMDP induces a continuous max-regret game.
\begin{proposition}
The max-regret game induced by the model of Section~\ref{sec:model} is continuous.
\end{proposition}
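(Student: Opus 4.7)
My plan is to verify the two clauses of the definition: that the pure strategy space of each player is non-empty and compact, and that the utility function $\payoff(\defpolicy, \attract) = \reward(\defpolicy, \attract) - \reward(\optpolicy(\attract), \attract)$ is jointly continuous on $\policyset \times \attractset$. The compactness on the nature side is immediate: $\attractset$ is a finite Cartesian product of closed intervals $[\attractlow{j}, \attracthigh{j}]$ and is non-empty by assumption. On the agent side, the per-timestep action set is the budget simplex $\{\effort \in [0,1]^N : \sum_i \efforti{i} \leq B\}$, a compact convex subset of $\mathbb{R}^N$; the state space $\mathcal{S}$ is bounded over the finite horizon $T$, since past efforts live in $[0,1]^N$ and wildlife densities are bounded above by iterating $w \mapsto w^\psi$ from the bounded initial $\wildlife^{(0)}$. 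I would take $\policyset$ to be the Markovian policies $\defpolicy : \mathcal{S} \to \mathcal{A}$ with a uniform Lipschitz bound, which forms a non-empty (constant policies are included) equicontinuous, uniformly bounded, and closed family, hence compact in $C(\mathcal{S}, \mathcal{A})$ by Arzelà–Ascoli.

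Next I would show that $\reward(\defpolicy, \attract)$ is continuous on $\policyset \times \attractset$. The plan is to trace continuity through the recursive dynamics over the finite horizon. The attack probability $p_i^{(t)}$ of Equation~\ref{eq:poacher-prob} is the composition of a logistic with an affine function of $(\attract, \effort^{(t-1)})$, hence continuous. Since the attacker outcome $\attacki{i}^{(t)} \in \{0,1\}$ is drawn from $B(p_i^{(t)})$, taking expectation over this finite set gives a continuous function of $(\attract, \effort^{(t-1)})$. The wildlife update in Equation~\ref{eq:wildlife-response} is a composition of continuous maps ($\max$, the power $\psi$, and a polynomial), so $\mathbb{E}[\wildlifei{i}^{(t)}]$ propagates as a continuous function of the preceding efforts, wildlife, and $\attract$. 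Because $\defpolicy \mapsto \defpolicy(\state)$ is continuous in the supremum topology on $\policyset$ (the evaluation map is continuous for equicontinuous families), iterating $T$ times yields that $\reward(\defpolicy, \attract) = \mathbb{E}[\sum_i \wildlifei{i}^{(T)}]$ is continuous on $\policyset \times \attractset$.

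Continuity of $\reward(\optpolicy(\attract), \attract) = \max_{\defpolicy \in \policyset} \reward(\defpolicy, \attract)$ then follows from Berge's maximum theorem: $\reward$ is continuous on $\policyset \times \attractset$ and $\policyset$ is a fixed compact set, so the value function $\attract \mapsto \max_{\defpolicy} \reward(\defpolicy, \attract)$ is continuous in $\attract$. Subtracting yields continuity of the max-regret game payoff, completing the verification.

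The main obstacle will be the compactness of $\policyset$, since policies live in an infinite-dimensional function space and the straightforward ``all measurable policies'' choice is not compact in any useful topology. The Lipschitz restriction combined with Arzelà–Ascoli is the cleanest route, but one must be careful that this restricted class is rich enough that $\optpolicy(\attract) \in \policyset$ for every $\attract \in \attractset$ (otherwise the $\max$ in the regret term may not be attained within $\policyset$); this can be argued from the smoothness of the logistic and wildlife updates, which ensures that an optimal policy in the finite-horizon dynamic program inherits a Lipschitz modulus that is uniform in $\attract \in \attractset$. Once this is pinned down, the continuity arguments are routine.
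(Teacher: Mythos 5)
Your route is genuinely different from, and considerably more careful than, the paper's. The paper's proof is two sentences: it asserts that the defender's strategy space is compact ``because each action space is compact'' (implicitly a product-space argument over states) and that the payoff is a composition of continuous primitives (addition, multiplication, max, logistic, exponentiation). You instead (i) restrict $\policyset$ to a uniformly Lipschitz family and obtain compactness from Arzel\`a--Ascoli in the uniform topology, and (ii) derive continuity of $\attract \mapsto \reward(\optpolicy(\attract), \attract)$ from Berge's maximum theorem rather than by listing $\max$ among the continuous building blocks. What your extra machinery buys is that the topology in which $\policyset$ is compact is also one in which $\reward$ is actually continuous in $\defpolicy$: under the pointwise-convergence topology that the paper's compactness claim implicitly relies on, perturbing a policy at early states changes which states are visited later, and pointwise convergence controls nothing about the policy's values at those new states, so continuity of the expected return is not automatic. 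Your treatment of the stochastic attack realizations (expectation as a finite mixture over $\{0,1\}^N$ outcomes with continuous weights) and the finite-horizon propagation is also sound.

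That said, your proposal has a genuine gap at exactly the point you flag and then defer. Restricting to a uniformly Lipschitz class changes the game unless the benchmark $\optpolicy(\attract)$ in the regret term lies in that class for every $\attract \in \attractset$ with a modulus uniform over $\attractset$. You assert this ``can be argued from the smoothness of the logistic and wildlife updates,'' but smoothness of the dynamics does not by itself yield Lipschitz (or even continuous) selections from the argmax of a finite-horizon dynamic program: ties in the Bellman optimality map, the kink introduced by $\max\{0, \cdot\}$ in the wildlife update, and the budget constraint can all make the optimal-action correspondence jump as the state varies. Without either an explicit uniform-modulus argument or a reformulation of the benchmark as $\sup_{\defpolicy \in \policyset} \reward(\defpolicy, \attract)$ over your restricted class together with a density argument showing this supremum equals the unrestricted optimum, the regret payoff you prove continuous may not be the regret payoff of the original game. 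Closing that step is the substantive work remaining; the rest of the argument is routine, as you say.
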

\begin{proof}
The defender's strategy space consists of an action in $[0,1]^N$ responding to each state. Because each action space is compact, the defender's strategy space is compact. Nature has a compact uncertainty space. Both are non-empty.

The defender's expected return in the max regret game (Definition~\ref{def:regret-game} and Equation~\ref{eq:r}) can be written as a composition of continuous functions: addition, multiplication, the max (required to compute max regret), the logistic function (required for Equation~\ref{eq:poacher-prob}), and exponentiation (Equation~\ref{eq:wildlife-response}). The composition of these functions is also continuous.
\end{proof}

We now prove the main technical lemma: that the defender oracle and the nature oracle calculate best responses in the max-regret game. Doing so implies that the mixed Nash equilibrium returned by Algorithm~\ref{alg:mirror} in the final subgame over finite strategy sets $(\policyset_e, \attractset_e)$ is an $\epsilon$--mixed Nash equilibrium of the entire max-regret game. This result allows us to apply Condition~\ref{cond:minimax}, showing equivalence of the lower value of the max-regret game and the minimax regret--optimal payoff.

\begin{figure*}
  \centering
  \input{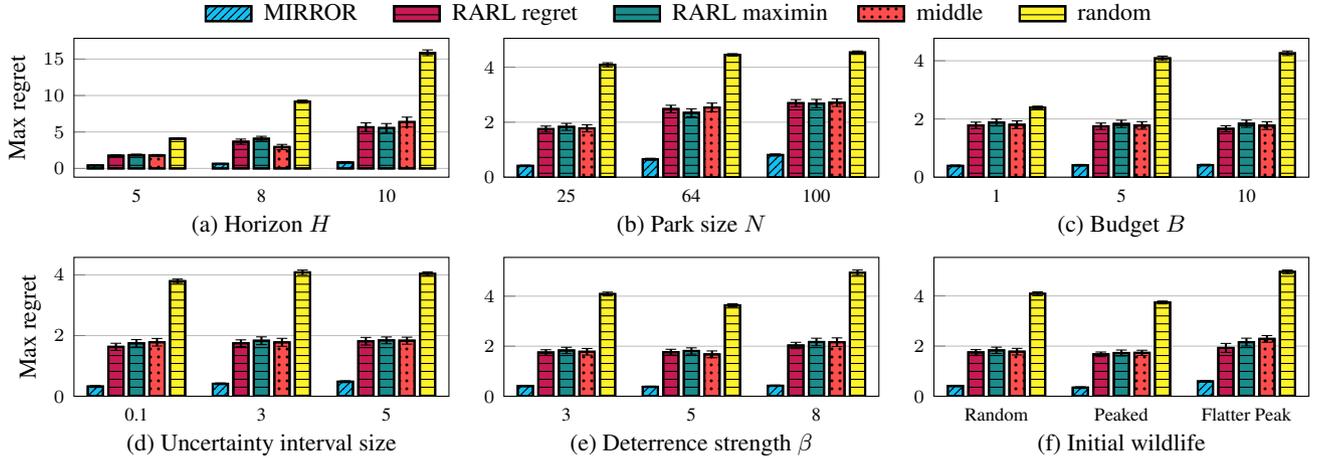}
  \caption{
  Comparing performance across varied settings, our MIRROR algorithm leads to the lowest max regret in all settings. We evaluate max regret by calculating the average reward difference between the selected policy and the optimal policy, with reward averaged over 100 episodes. We use as the default setting $H=5$, $N=25$, $B=5$, uncertainty interval $3$, $\pasteffortcoef=-5$, and random wildlife initialization. Standard error shown averaged over 30 trials.}
  \label{fig:performance}
\end{figure*}

\begin{lemma}
\label{lemma:MIRROR}
At epoch~$e$, policy $\defpolicy_e$ and environment parameters $\attract_e$ are best responses in the max-regret game to mixed strategies $\mixedattract_e$ and $\mixedpolicy_e$, respectively.
\end{lemma}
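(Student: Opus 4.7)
The plan is to show each oracle independently computes a best response by decomposing its training objective into the corresponding player's expected payoff in the max-regret game plus a term that does not depend on that player's choice.

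For the agent oracle, I would fix nature's mixed strategy $\mixedattract_e$ and expand the agent's expected payoff using Definition~\ref{def:regret-game}:
\begin{equation*}
\mathbb{E}_{\attract \sim \mixedattract_e}\left[\reward(\defpolicy, \attract) - \reward(\optpolicy(\attract), \attract)\right] = \mathbb{E}_{\attract \sim \mixedattract_e}\left[\reward(\defpolicy, \attract)\right] - C(\mixedattract_e),
\end{equation*}
where $C(\mixedattract_e) := \mathbb{E}_{\attract \sim \mixedattract_e}[\reward(\optpolicy(\attract), \attract)]$ is independent of $\defpolicy$. Maximizing the payoff therefore reduces to maximizing $\mathbb{E}_{\attract \sim \mixedattract_e}[\reward(\defpolicy, \attract)]$, which by Equation~\ref{eq:r} is the expected cumulative reward in the MDP whose transitions are the $\mixedattract_e$-mixture of $\{\mathcal{T}_{\attract}\}_{\attract \in \attractset_e}$. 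This is exactly the standard RL objective the DDPG-based \textsc{AgentOracle} targets, so $\defpolicy_e$ is a best response.

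For the nature oracle, I would fix $\mixedpolicy_e$ and write nature's expected payoff at pure strategy $\attract$ as $\reward(\optpolicy(\attract), \attract) - \mathbb{E}_{\defpolicy \sim \mixedpolicy_e}[\reward(\defpolicy, \attract)]$. Substituting $\reward(\optpolicy(\attract), \attract) = \max_{\altpolicy} \reward(\altpolicy, \attract)$ and combining the resulting pair of maxima, nature's best-response problem becomes the joint optimization
\begin{equation*}
\max_{(\altpolicy, \attract)} \left( \reward(\altpolicy, \attract) - \mathbb{E}_{\defpolicy \sim \mixedpolicy_e}[\reward(\defpolicy, \attract)] \right),
\end{equation*}
which is precisely the objective Algorithm~\ref{alg:nature-oracle} optimizes under its shared $(\altpolicy, \attract)$ parameterization. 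At a global maximizer, $\altpolicy_e = \optpolicy(\attract_e)$, so the marginal $\attract_e$ is nature's best response in the max-regret game.

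The hard part will be justifying that the gradient-based wake--sleep procedure in the nature oracle actually reaches a global maximum of this non-concave joint objective; the agent side is comparatively clean because the additive constant removes any coupling through $\optpolicy$. I would state the lemma under the standard oracle-exactness assumption used throughout the double-oracle literature, and appeal to the paper's parameter perturbation (Section~\ref{sec:param-perturbation}) as the practical mechanism for absorbing oracle imperfection by enriching the agent strategy set with best responses to perturbed parameter settings around $\attract_e$.
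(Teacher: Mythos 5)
Your proposal is correct and follows essentially the same route as the paper: the agent side uses the identical observation that $\reward(\optpolicy(\attract),\attract)$ is an additive constant independent of $\defpolicy$, and the nature side rests on the nature oracle's training objective coinciding with nature's payoff against $\mixedpolicy_e$ (your rewriting of $\reward(\optpolicy(\attract),\attract)$ as $\max_{\altpolicy}\reward(\altpolicy,\attract)$ just makes explicit why the joint optimization over $(\altpolicy,\attract)$ is the right one). Your closing caveat about oracle exactness is also the assumption the paper implicitly makes, with parameter perturbation as the practical mitigation.
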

\begin{proof}
For the nature oracle, this is immediate because the reward of the nature oracle is exactly the payoff nature would receive in the max-regret game when playing against $\mixedpolicy_{e-1}$. For the agent oracle, the expected payoff of a strategy $\defpolicy$ against $\mixedattract_{e-1}$ in the max-regret game is $\mathbb{E}_{\attract \sim \mixedattract_{e-1}}[\reward(\defpolicy, \attract) - \reward(\optpolicy(\attract), \attract)]$. Because $\reward(\optpolicy(\attract), \attract)$ does not depend on $\defpolicy$, the policy that maximizes $\mathbb{E}_{\attract\sim\mixedattract_{e-1}}[\reward(\defpolicy, \attract)]$ also maximizes the agent's utility in the max-regret game. This quantity is exactly the reward for the agent oracle.
\end{proof}

\begin{theorem}
\label{thm:convergence}
If Condition~\ref{cond:minimax} holds and Algorithm~\ref{alg:mirror} converges, the agent mixed strategy returned by Algorithm~\ref{alg:mirror} achieves a minimax regret that is at most $\epsilon$ less than the minimax regret--optimal policy. If the max-regret game is either continuous with $\epsilon > 0$ or finite, Algorithm~\ref{alg:mirror} converges in a finite number of epochs.
\end{theorem}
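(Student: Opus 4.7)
The plan is to split Theorem~\ref{thm:convergence} into a correctness claim upon termination and a finite-time termination claim, combining Lemma~\ref{lemma:MIRROR} with Condition~\ref{cond:minimax} and a standard double-oracle potential argument.

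For correctness I would first argue that when the check at line~\ref{line:converge} succeeds at epoch~$e$, the pair $(\mixedpolicy_e, \mixedattract_e)$ is an $\epsilon$--mixed Nash equilibrium of the \emph{entire} max-regret game, not merely of the subgame over $(\policyset_{e-1}, \attractset_{e-1})$. The lever is Lemma~\ref{lemma:MIRROR}: the agent and nature oracles return full-game best responses against $\mixedattract_e$ and $\mixedpolicy_e$. The two inequalities in the termination check therefore certify that no pure strategy in $\policyset$ or $\attractset$ improves either player's payoff by more than $\epsilon$ over the incumbent subgame equilibrium value, and because best responses to mixed strategies in zero-sum games can always be taken pure, this gives the $\epsilon$--mixed Nash equilibrium property globally. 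Condition~\ref{cond:minimax} then yields $|\underline{v} - (\reward(\mixedpolicy_e, \mixedattract_e) - \reward(\mixedoptpolicy(\mixedattract_e), \mixedattract_e))| \leq \epsilon$, and the rearrangement of Equation~\ref{eq:minimax-regret} noted in the excerpt identifies $\underline{v}$ with the negated minimax regret, finishing the first conclusion.

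For finite-time termination I would handle the two cases separately. In the finite case, the strategy sets $\policyset_e, \attractset_e$ grow monotonically as subsets of a finite ground set, so after finitely many epochs no new strategy can be returned; the subgame NE then coincides with the full-game NE and the termination check passes. In the continuous case with $\epsilon > 0$, continuity of the payoff on a compact product of strategy spaces gives a bounded payoff range $[M_{\min}, M_{\max}]$. I would then argue by potential on the subgame value $v_e$ (well-defined by the minimax theorem on each finite subgame): whenever the termination check fails, at least one oracle has produced a best response whose payoff against the current mixed opponent beats the prior subgame value by more than $\epsilon$, which shifts $v_e$ by at least $\epsilon$ in the corresponding direction. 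Boundedness of $v_e$ in $[M_{\min}, M_{\max}]$ caps the number of non-terminating epochs at roughly $(M_{\max} - M_{\min})/\epsilon$.

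The main obstacle is making this monotonicity step watertight, since in generic double-oracle runs the subgame value need not move monotonically when both players add strategies in the same epoch. My plan is to note that monotonicity is only needed \emph{conditional on a non-termination event}: in such an event at least one of the two best-response inequalities is violated by more than $\epsilon$, and Lemma~\ref{lemma:MIRROR} together with the minimax structure of the subgame guarantees that inserting the offending pure best response strictly shifts $v_e$ by at least $\epsilon$ in the direction favoring the violating player. Combined with the $M_{\max} - M_{\min}$ envelope, this conditional progress argument bounds the number of non-terminating epochs by a finite quantity depending on $\epsilon$ and the payoff bounds, closing the argument.
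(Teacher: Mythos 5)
Your correctness argument and your finite-game termination argument both track the paper's proof: the paper likewise combines Lemma~\ref{lemma:MIRROR} with the termination check to conclude that $(\mixedpolicy_e,\mixedattract_e)$ is an $\epsilon$--mixed Nash equilibrium of the \emph{entire} max-regret game, applies Condition~\ref{cond:minimax}, and disposes of the finite case by noting that each pure strategy can be added at most once. Your added detail on why full-game best responses plus the termination inequalities yield a global $\epsilon$-equilibrium is a correct expansion of what the paper leaves implicit.

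The gap is in your continuous-game termination argument. The paper does not attempt a potential argument here; it invokes Theorem~3.1 of \citet{adam2021double}, whose proof goes through compactness of the strategy spaces and an accumulation-point argument, not through per-epoch progress of the subgame value. Your claimed progress step --- that a non-terminating epoch shifts the subgame value $v_e$ by at least $\epsilon$ in the direction of the violating player --- is false in general, and your own proposed fix does not repair it. The oracle's best response $\defpolicy_e$ beats the value by more than $\epsilon$ only \emph{against the fixed mixed strategy} $\mixedattract_e$; once $\defpolicy_e$ is inserted into the subgame, nature re-equilibrates and may shift mass onto strategies already in $\attractset_{e-1}$ against which $\defpolicy_e$ performs poorly, so the new subgame value can exceed the old one by an amount arbitrarily smaller than $\epsilon$ (and symmetrically for nature). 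On top of that, both players add strategies in the same epoch, and the two effects on $v_e$ have opposite signs, so even genuine progress by one player can be cancelled by the other; "conditional on non-termination" does not exclude this, since both inequalities can be violated simultaneously. Consequently the bound $(M_{\max}-M_{\min})/\epsilon$ on the number of epochs does not follow, and indeed no such explicit rate is known for double oracle on continuous games --- only finiteness for $\epsilon>0$. To close the argument you would either need to reproduce the compactness argument of \citet{adam2021double} (a convergent subsequence of added nature parameters forces eventual best responses to be $\epsilon$-dominated by strategies already in the set, by uniform continuity of the payoff) or simply cite that result as the paper does.
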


\begin{proof}
Because the convergence condition for Algorithm~\ref{alg:mirror} is satisfied, $(\mixedpolicy_{e},\mixedattract_{e})$ is an $\epsilon$--mixed Nash equilibrium in the max-regret game by Lemma~\ref{lemma:MIRROR}. Applying Condition~\ref{cond:minimax} yields the result that the payoff of $\mixedpolicy_{e}$ is within $\epsilon$ of the minimax regret--optimal policy of the original UMDP.

If the max-regret game is finite, there are only finite number of strategies to add for each player and each strategy may be added only once---thus, Algorithm~\ref{alg:mirror} converges in finitely many epochs. If the max-regret game is continuous, Theorem 3.1 of \cite{adam2021double} guarantees convergence in finite epochs due to Lemma~\ref{lemma:MIRROR}.
\end{proof}

\section{Experiments}
\label{sec:experiments}

We conduct experiments using a simulator built from real poaching data from Queen Elizabeth National Park in Uganda, based on our analysis in Section~\ref{sec:deterrence}.
We consider robust patrol planning in the park with $N = 25$ to $100$ targets representing reasonably the area accessible from a patrol post. Each target is a $1 \times 1$~km region. 

We compare against the following four baselines. \textit{Middle} computes an optimal defender strategy assuming the true value of each parameter is the middle of the uncertainty interval. \textit{Random} takes a random strategy regardless of state. We apply the same parameter perturbations to the baselines as we do to the others and report the top-performing baseline variant. \textit{RARL maximin} uses robust adversarial learning \cite{pinto2017robust}, a robust approach optimizing for maximin reward (instead of minimax regret). We also add a variant we introduce of RARL, \textit{RARL regret}, which has a regret-maximizing adversary (instead of the reward-maximizing adversary typical in RARL) that leverages novel innovations of our nature oracle.
We evaluate performance of all algorithms in terms of maximum regret, computed using the augmented payoff matrix (with baselines and perturbed strategies) described in Section~\ref{sec:robust-planning}. The max regret is calculated by determining, for each parameter value, the defender strategy with the highest reward. In every experiment setting, we use the same strategy sets to compute max regret for all of the approaches shown.
Note that we would not expect any algorithm that optimizes for maximin reward to perform significantly better in terms of max regret than the middle strategy due to the regret criterion. 

Figure~\ref{fig:performance} shows the performance of our MIRROR algorithm compared to the baselines. Across variations of episode horizon, park size, deterrence strength, wildlife initial distributions, budget, and uncertainty interval size, MIRROR significantly reduces max regret. 
Deterrence strength changes the value of $\pasteffortcoef$ in Equation~\ref{eq:poacher-prob} to reveal the potential effectiveness of our actions. The wildlife initializations options are a uniform random distribution, a peaked Gaussian kernel (representing a core animal sanctuary in the park center), and a flatter Gaussian kernel (representing animals distributed more throughout the park, although more concentrated in the center). The uncertainty interval size restricts the maximum uncertainty range $\attracthigh{i} - \attractlow{i}$ for any target~$i$. 

One of the most notable strengths for MIRROR is shown in Figure~\ref{fig:performance}(a). As the episode horizon increases, thus the defender is tasked with planning longer-term sequences of decisions, MIRROR suffers only mildly more regret while the regret of the baseline strategies increases significantly. The scalability of MIRROR is evidenced in Figure~\ref{fig:performance}(b) as our relative performance holds when we consider larger-sized parks.  

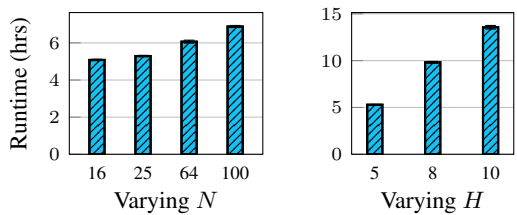
\begin{figure}
  \centering
  \begin{tikzpicture}
\pgfplotsset{
  width=.5\linewidth,
  height=0.42\linewidth,
  xtick pos=left,
  ytick pos=left,
  tick label style={font=\scriptsize},
  ymajorgrids=true,
  x tick label style={yshift=.7ex}, 
  xlabel shift=-2pt,
  xtick style={draw=none},
}

\begin{axis}[
  table/col sep=comma,
  ybar,
  bar width=.2cm,
  symbolic x coords={16, 25, 64, 100},
  xtick={data},
  xlabel={\footnotesize{Varying $N$}}, 
  ylabel={\footnotesize{Runtime (hrs)}},
  legend style={
    at={(1.05,1.26)},
    legend columns=3,
    font=\small,draw=none,fill=none},
  ymin=0, 
  enlarge x limits={0.2},
  error bars/y dir=both, 
  error bars/y explicit,  
  error bars/error bar style={color=black, thick},
]

\addplot+[cyan!70!white, draw=black, thick, area legend,
postaction={pattern=north east lines,}
] table [x=n, y=runtime_avg, y error=runtime_stderr]{data/runtime_n.csv};

\end{axis}

\begin{axis}[
  width=0.45\linewidth,
  at={(.45\linewidth, 0\linewidth)},
  table/col sep=comma,
  ybar,
  bar width=.2cm,
  symbolic x coords={5, 8, 10},
  xtick={data},
  xlabel={\footnotesize{Varying $H$}}, 
  legend style={
    at={(1.05,1.26)},
    legend columns=3,
    font=\small,draw=none,fill=none},
  ymin=0, 
  enlarge x limits={0.2},
  error bars/y dir=both, 
  error bars/y explicit,  
  error bars/error bar style={color=black, thick},
]

\addplot+[cyan!70!white, draw=black, thick, area legend,
postaction={pattern=north east lines,}
] table [x=horizon, y=runtime_avg, y error=runtime_stderr]{data/runtime_h.csv};
\end{axis}

\end{tikzpicture}
  \caption{The runtime of MIRROR  remains reasonable as we increase the park size~$N$ (which also increases the number of uncertain parameters) and horizon~$H$.}
  \label{fig:runtime}
\end{figure}

The runtime is shown in Figure~\ref{fig:runtime}, where we show that MIRROR is able to run in reasonable time as we scale to larger problem sizes, including in settings with 100 uncertain parameters ($N=100$). Rangers typically plan patrols once a month, so it is reasonable in practice to allot 5 to 15 hours of compute per month to plan. Tests were run on a cluster running CentOS with Intel(R) Xeon(R) CPU E5-2683 v4 @ 2.1 GHz with 16 GB RAM and 4 cores. 

Our strong empirical performance offers promise for 
effective real-world deployment for MIRROR. Uncertainty in the exact environment parameters is one of the most prominent challenges of sequential planning in the complex real-world setting of green security.

\section{Conclusion}
Our work is the first, across artificial intelligence and conservation biology literature, to show ranger patrols do deter poachers on real-world poaching data. Following this finding, we identify the problem of sequential planning for green security that is robust to parameter uncertainty following the minimax regret criterion, a problem that has not been studied in the literature. We address this challenge with our novel RL-based framework, MIRROR, which enables us to learn policies evaluated on minimax regret. We show the strength of MIRROR both theoretically, as it converges to an $\epsilon$--max regret optimal strategy in finite iterations, and empirically, as it leads to low-regret policies. We hope that our results inspire more work in green security based on our realistic adversary model and that our MIRROR framework is useful for future work on learning RL-policies that are optimal under minimax regret. 




\begin{acknowledgements} 

We are thankful to the Uganda Wildlife Authority for granting us access to incident data from Murchison Falls National Park. This work was supported in part by the Army Research Office (MURI W911NF1810208), NSF grant IIS-1850477, and IIS-2046640 (CAREER). Perrault and Chen were supported by the Center for Research on Computation and Society.
\end{acknowledgements}

\bibliography{short,ref}

\end{document}